\newtheorem{defi}{Definition}
\newtheorem{ex}{Example}
\newtheorem{prop}{Proposition}
\newtheorem{remark}{Remark}
\newtheorem{corollary}{Corollary}
\newenvironment{exs}[1]{\smallskip \noindent{\bf Example~\ref{#1}   (continued):}\em}{}
\newcommand\Omit[1]{}
\DeclareMathOperator{\entails}{\mid\!\sim}
\DeclareMathOperator{\entailsnot}{\mid\!\not\sim}
\DeclareMathOperator*{\inn}{\mathrel{{\subset}\!\!\!\!{\Equal}}}
\DeclareMathOperator*{\CWA}{CWA}
\DeclareMathOperator*{\NP}{NP}
\DeclareMathOperator*{\lollypop}{\rule[2.5pt]{6pt}{.5pt}\hspace{-1pt}\circ}
\DeclareMathOperator*{\SMAX}{SMax}
\newcommand\todoin[2][]{\todo[inline, caption={2do}, #1]{
\begin{minipage}{\textwidth-4pt}#2\end{minipage}}}
\newcommand{\flo}[1]{\textcolor{purple}{Flo: #1}}
\newcommand{\ben}[1]{\textcolor{blue}{Ben: #1}}
\renewcommand{\L}{\mathcal L}
\newcommand{\V}{\mathcal V}
\newcommand{\G}{\mathcal G}
\newcommand{\until}[1]{\!_{\rightarrow #1}}
\def\Equal{\rule[.4ex]{5pt}{0.4pt}\llap{\rule[.7ex]{5pt}{0.4pt}}}
\begin{document}

\title{What killed the cat? \\Towards a logical formalization of curiosity (and
  suspense, and surprise) in narratives\thanks{This is a draft version, the
    paper is published in LIPIcs, Volume 318, containing the proceedings of the 31st International Symposium on
    Temporal Representation and Reasoning (TIME 2024).}} 



\author{\hspace{-0.5cm}\begin{tabular}{llll}
Florence Dupin de Saint-Cyr$^{1,2}$ & Anne-Gwenn Bosser$^1$ & Benjamin Callac$^1$ & Eric
Maisel$^1$
\end{tabular}\\
  \small $^1$ Lab-STICC\_COMMEDIA, CNRS UMR6285, Brest, France \\
  \small $^2$ IRIT, Université de Toulouse, France \\
}

\maketitle

\begin{abstract}
We provide a unified framework in which the three emotions at the
heart of narrative tension (curiosity, suspense and surprise) are formalized. This
framework is built on non-monotonic reasoning which
allows us to compactly represent the default behavior of the world and to
simulate the affective evolution of an agent receiving a story.
After formalizing the notions of awareness, curiosity, surprise and suspense, we
explore the properties induced by our definitions and study the computational
complexity of detecting them. We finally propose means to evaluate these
emotions’ intensity for a given agent listening to a story.

\textbf{Keywords:} Knowledge Representation, Narration, Cognition 
\end{abstract}

\section{Introduction}
\label{sec:intro}

Humans tell stories to make sense of the world and communicate  their understanding of what happens. Storytelling supposes to be able to sort out which events are worth telling, deciding on a level of detail for describing events, selecting among possible causes the ones which are deemed worth telling. It also supposes to make use of an affective machinery for capturing an audience's attention (emotional contagion, suspense elicitation...). In the act of storytelling, structural and affective phenomena are thus combined with communicative goals in mind. This combination has indeed shown its effectiveness in this respect: the phenomenon of narrative transportation (the experience of being immersed in a story) has been linked to persuasion~\cite{green_2000}. The narrative paradigm therefore provides an appropriate framework, in which causal reasoning about the situations narrated~\cite{trabasso_1985} is combined with narrative devices to encourage the audience's emotional involvement~\cite{sternberg_2001}, to study and model how opinion is formed and evolves. Building a framework for reasoning about and unveiling storytelling mechanics could pave the way for intellectual self-defense supporting tools, enabling citizens to arm themselves against hostile disinformation or influence campaigns.

  
Previous works in structural narratology have studied
the way stories are conveyed to their audience and seminal work from (for
instance) Genette \cite{genette1979} or Propp \cite{propp_1968} have previously served as the
backbone inspiration for computational narrative models and storytelling systems
\cite{pizzi_2007}. Whilst the operationalization of narrative theories is still
subject to debate and caution, such works have shed light on how the story
material to tell 
and the manner in which it is told 
interacts with a model of the listener\footnote{For sake of homogeneity, we use the term listener in all the paper, while this kind of agent is called interpret by Baroni.} (which, depending on the media used for conveying the story can also be a reader, a spectator, or even a gamer): the act of storytelling can thus be understood as knowledge transfer and manipulation of her beliefs.

According to Sternberg \cite{sternberg_2001} or Baroni \cite{baroni_2007}, emotions more specific to narratives which are \emph{suspense}, \emph{curiosity} and \emph{surprise} are critical to retain the interest of the listener. Drivers of the \emph{narrative tension}, they are paramount in maximizing her engagement. 
%
%
%
In this paper we focus on these narrative tension's building blocks. 

In the field of computational narratives, numerous studies and frameworks exist to tell interactive stories, a number of them as an application of planning technologies \cite{Rivera_Jhala_Porteous_Young_2024} allowing to adapt the narrative to each user's actions.
However, adapting a narrative to a model of the user's emotions remains largely a challenge that needs to be addressed to favor engagement: narrative engagement depends partly on the appropriate maintenance of narrative tension, itself based on the uncertainty occurring in a narrative \cite{bonoli_2008}, 
and listener's models based on a formalization of related emotions have comparatively been less addressed so far in the literature. 
While suspense and surprise have been the object of previous studies \cite{cheong_2015} \cite{ely_2015}, there is --- to our knowledge --- still no curiosity model applicable to narratives.



In the following, we present a preliminary study for characterizing these emotions from an epistemic standpoint, with a focus on modeling the listener's curiosity depending on her beliefs and knowledge using a propositional language. Our overarching aim is to provide a unifying framework allowing to represent emotions relevant to the characterization of narrative tension and its evolution, which would enable to discuss their relationships and ultimately help establish dramatic metrics about a narrative.

In Section~\ref{sec:state_of_art}, we describe the main emotions supporting narrative tension. We also describe the problems and solutions for formalizing reasoning about action and change, as well as the ways in which the notions of surprise and awareness have been treated in the literature. Section~\ref{sec:formalization} details our proposal, which relies on a non-monotonic framework resulting from extending propositional logic with default rules. The properties of the framework are presented in Section~\ref{sec:properties}, along with some preliminary ideas for developing metrics. 




\section{Background on reasoning about change and narrative tension}\label{sec:state_of_art}

In order to reason about a story, it is useful to dispose of a way to handle the concepts involved for understanding a sequence of facts and events. We first briefly outline the background to this vast subject, before discussing the formalization of emotions in the literature.
\subsection{Reasoning about action and change}
The formalization of action and change is an old field of research in the domain of knowledge representation and reasoning in AI.
There are many different reasoning tasks in this field (see e.g. \cite{DHLM20}) like prediction of the new state of the world after an action (which is related to 
\emph{belief update} \cite{Winslett90, KaMe91}), or integration of an observation (which is related to \emph{belief revision} \cite{AGM85}), or \emph{event abduction} which consists in guessing which event took place, or \emph{scenario extrapolation} \cite{DuLa11,DupindeSaintCyr08} which consists in taking a partial description of facts and events that occurred and complete it (by prediction or event abduction) or \emph{scenario recognition} \cite{DoLM07}. 

These reasoning tasks were studied in various frameworks, the representation of actions in a compact way has given rise to some problems known as the frame, the ramification and the qualification problems \cite{McHa69,Finger87,McCarthy77}. 
%
In propositional logic, these problems were solved by a majority of approaches by introducing a special symbol for expressing a causal rule relating preconditions of an action to its effect (indeed classical implication cannot separate a cause from a consequence due to contraposition). Actions are first described by such rules, then given the set of causal rules, a set of formulas (called frame axioms) are generated stating  that any fluent $f$ is true at time $t+1$ if and only if it was (a) true at $t$ and no causal rule concluding $\neg f$ can be fired at $t$ or (b) false at $t$ and a causal rule concluding $f$ can be fired. 

As a proof of concept, we choose to use \emph{propositional logic} in this article where we face a problem that can be viewed as an extension of belief extrapolation with narrative tension analysis. However in order to perform non-monotonic reasoning (which allows agents to change their minds and thus accept surprises), we use default rules of the form $a \leadsto b$ to encode causal relations. Note that another prominent formalization of default rules was given by Reiter in \cite{Reiter80}, but we choose to rely on a simpler formalism at first.

Logical approaches to computational narratives have been proposed in the past. In ~\cite{bosser_linear_2010}, (Intuitionistic) Linear logic has been argued to be a suitable representational model for narratives for its capacity to finely represent narrative actions through the production and consumption of resources. This language provides the symbol $\lollypop$ which can be used  in $A \lollypop B$ to express the validity of transforming resource $A$ into resource $B$, the flow of resources consumption through the associated sequent calculus allowing to establish causal relations. Dynamic logic \cite{Hintikka62} and its epistemic extensions \cite{bolander2011epistemic} are formalisms with higher expressiveness. In this work, we propose to characterize narrative tension phenomenon in \emph{propositional logic} (extended with default rules) to demonstrate the representational uniformity of these concepts and their relationships with each other. We will explore how to encode them in aforementioned logics in the future, keeping in mind the challenges raised by their operationalization in their most expressive fragments \cite{aucher2013undecidability}. 

\subsection{Emotions supporting narrative tension}

Psychological models of emotions are often used in the field of affective computing such as models from Ekman \cite{ekman_1992} or Plutchik \cite{plutchik_1980} (which include surprise). Other works \cite{ortony_2022} consider that every emotion should have a valence and, as a consequence, surprise, which is inherently neither good nor bad, is considered as a different affective phenomenon.
As we position ourselves in the context of studying the emotional states of a listener, we will rely on the characterizations given by Baroni in \cite{baroni_2007}.
Curiosity occurs when there is a partial omission of crucial knowledge: at a given stage when experiencing the storytelling experience, the listener knows they are missing important information. Suspense arises when an event could potentially lead to an impacting result --- be it good or bad --- to the storyline, and is correlated with anticipation. Surprise results from a rupture from previous expectations, which retroactively invalidates some of the predictions made by the listener:  the listener has expectations about how the story will develop, based on story genre or common sense. Going against these expectations while maintaining coherence is what causes surprise. Baroni distinguishes curiosity and suspense from surprise, as the former two are tied to anticipation and an urge to know, whilst the latter arises sporadically as the narrative progresses, which will be reflected in our model.

Related to suspense, the concept of \emph{narrative closure} also reflects the epistemic nature of storytelling (as theorized by Carroll \cite{carroll_2007}): this encompasses the phenomenological feeling of finality that is generated when all the questions saliently posed by the narrative to the listener are answered. Previous work in the psychology of narrative understanding \cite{trabasso_1985} has also tied the perception of the importance of story events to causal relationships' perception. 
In this paper we borrowed from this work, especially tracing a graph representing the narrative with nodes being actions, preconditions and effects and edges being causal relations. We will consider the degree of a node as reflecting its importance in the narrative, reading it as, the more an action is a consequence and has consequences the more important it is. 

\subsection{Logical models of emotions, surprises and awareness}


The logical representation of emotions has already received some attention, see e.g. Lorini \cite{lorini_2011} or Adam \cite{adam_2009} who formalized emotions based on the OCC theory \cite{ortony_2022}. In these works (which relies on a modal logic for BDI agents), an agent has beliefs, including beliefs about what is \emph{good for herself}, and expresses different emotions such as joy or sadness. 

The particular case of \emph{surprise} was studied by several authors in computer science, but the first study is due to an economist named Shackle \cite{Shackle1961} who defined the degree of surprise associated with an event as the degree of impossibility of this event given the uncertain knowledge about the situation considered. In Lorini and Castelfranchi \cite{LoCa2007}, the role of surprise is investigated in the context of belief update. They associate a surprise with 
a difficulty to integrate the new piece of information, this occurs when there is a form of inconsistency between expectation and perception. 
Surprise was recently formalized in the context of the analysis of jokes by \cite{DuPr23}, indeed surprise has been considered as an important ingredient for laughter by many authors, the model of surprise of \cite{DuPr23} is based on a revision operator and non-monotonic reasoning: to be surprised the listener of a joke should be able to jump to conclusions that can be questioned and even revised.



The characterization of \emph{curiosity} provided by Baroni 
emphasizes that the listener is aware of its incomplete knowledge and that surprise is linked to a notion of disturbance which makes the agent to question his assumptions/beliefs and leads him to reconsider his understanding of the story. This reconsideration reminds the operation of \emph{awareness raising} introduced by \cite{vanbenthem_2010} to allow agents ``to make their implicit knowledge explicit’’. Logical models taking into account agent's awareness  have previously been defined in the literature.
As Halpern \cite{halpern_2001} states, traditionally when reasoning about agents' beliefs, it is assumed they are aware of every proposition. Modica and Ristichini \cite{modica1994awareness} first came up with a definition of awareness based on knowledge, stating that an agent is aware of $p$ if he knew $p$ or if he knew he did not know $p$. Halpern extends on this by introducing \textit{implicit} knowledge, where agents are aware of all propositions and can reason with them ; and \textit{explicit} knowledge, which captures the conclusions of which the agent is explicitly aware of. In this system, explicit knowledge is also implicit, while the reverse is not necessarily true.

Previous works have proposed models for agents in computational narratives such as \cite{norling_2003} or \cite{cardona-rivera_2012} based on Belief, Desire and Intention (BDI). In
\cite{rivera-villicana_2016a}, a BDI agent aiming to simulate player behavior in interactive stories takes into account the player personality. Other work has assigned personality stereotypes to users \cite{thue_2006, barber_2010} according to their interactions with the system. Whilst such models allow personalizing an interactive narrative, they would not enable a storytelling engine to finely drive the narrative tension.
By contrast, the \emph{Suspenser} system by \cite{cheong_2015} offers an
operationalization for suspense elicitation, one of the three drivers of narrative tension. In \emph{Suspenser}, suspense is maximized by ordering multiple story bits at the discourse level. 
We lay in this paper the groundwork for ultimately representing in a unified logical framework suspense, curiosity and surprise, the three drivers of narrative tension. This will build strong foundations for future generative and interactive systems able to operate both at the story and discourse levels.

We approach the modelization of curiosity, suspense and surprise as constructs at given moments of a narrative experience from the listener's  beliefs and by non-monotonic reasoning about these beliefs.
Doing so, we believe our model is compatible with previous formalization while providing new insights.

\section{Formalizing curiosity, surprise and suspense}\label{sec:formalization}

We first present an example to illustrate the concepts introduced throughout this article.
\begin{ex}[The box]\label{exboite}
To illustrate the framework, we 
present a short story involving three agents, Albert, Erwin as well as a protagonist Cecilia\footnote{We consider a story involving Albert Einstein, Erwin Schrödinger and Cecilia Payne-Gaposchkin, hence the cat in the title.} (respectively agents A, E and C). A short narrative : “Cecilia enters her office. She sees a box lying on her desk that was not there when she last left the room.” Our hypothesis is that this event sparks curiosity in Cecilia’s mind. 
We look at it from the point of view of Cecilia who reasons in a closed world where nothing, except three particular events (Albert putting a box on Cecilia's desk, Erwin doing it, Cecilia opening the box) can interact with the state of the world. 
\end{ex}

We consider a set of variable symbols $\V$ denoted by Latin lower case letters, from this set of symbols we build the vocabulary $\V_T$ containing all variables of $\V$ indexed by all the integers taken in the set $T=\{0, 1, .., N\}$ representing time points. $\L$ is the propositional language based on $\V_T$ with the usual connectors and constants $\vee$, $\wedge$, $\neg$, $\to$, $\equiv$, $\bot$ and $\top$ denoting respectively the logical connectors ``or'', ``and'', ``not'', material implication and logical equivalence, contradiction, and tautology. 
The symbol $\models$ represent satisfiability. Let $\Omega$ denote the set of interpretations induced by $\V_T$, we will often use $\omega$ for naming a particular interpretation in $\Omega$, each interpretation will be described by the list of literals satisfied by it, e.g., considering the vocabulary $\V=\{a,b\}$, and a set of two time points $T=\{0,1\}$ $\omega=(a_0, ~\neg b_0, ~\neg a_1, \neg b_1)$ is an interpretation in $\Omega$ that associates the truth value True to $a$ and False to $b$ at time step 0 and False to $a$ and $b$ at time step 1. 
%
%
The set $Mod(A)\subseteq \Omega$ is the set of interpretations satisfying the set of propositional formulas $A\subseteq \L$ ($Mod(A)=\{\omega\in \Omega| \omega \models \bigwedge_{\varphi\in A} \varphi\}$), the same notation is used to represent the set of models of a formula $Mod(\varphi)=\{\omega\in \Omega|\omega \models \varphi\}$. 

\begin{exs}{exboite}
To study this flow of events taking place in 4 time steps denoted $T=\{0,1,2,3\}$ we need a vocabulary $\V=\{box$, $A$, $E$, $C$, $empty$, $visible\}$ meaning respectively there is a box on Cecilia's desk, agent A puts a closed box on the desk, agent E puts a closed box on the desk, agent C opens the box, there is nothing in the box and something inside the box is uncovered (and thus the box has been opened). In the language $\L$ built on $\V$ and $T$, the following expression is an example of a well-formed formula: $(A_0\vee E_0) \wedge box_1 \wedge C_2\wedge \neg empty_2$.
\end{exs}

\emph{Default rules} are rules that tolerate exceptions and allow us to reason in presence of incomplete information, by assuming that the situation is not exceptional when there is no evidence for the contrary. The notation  $\alpha \leadsto \beta$ (with $\alpha,\beta\in \L$) is used to represent a default rule interpreted as when $\alpha$ is true, it is more plausible that $\beta$ is true than false.

\begin{exs}{exboite}
In order to be able to encode this example we propose to use default rules to express that by default some fluents keep their value: the following rule is expressing that when there is no box at time point 0 then by default there is no box at time point 1: $\neg box_0\leadsto\neg box_1$. This rule admits exceptions: namely, if A puts a box on the desk at time point 0 then generally there is a box at time point 1: $A_0\land\neg box_0\leadsto box_{1}$.
\end{exs}

Given a set of default rules $\Delta$ it is possible to define a ranking of these rules according to their specificity, thanks to ``System Z’’ algorithm \cite{Pearl1990}, the default base is then called stratified, its stratas are the formulas with the same rank\footnote{System Z ordering method is based on the
tolerance notion between rules. More precisely, a rule $r=\alpha\leadsto \beta$ is tolerated by a set of $n$ rules $R\subseteq\Delta$ iff $\alpha \wedge\beta \wedge
\bigwedge_{\alpha_i\leadsto\beta_i\in R} (\neg \alpha_i\vee \beta_i)$ is consistent. 
The process continues until $\Delta$ contains only rules tolerated by all the other ones, they constitute the most specific stratum called $\Delta_1$ ($\Delta_n$ being the least specific stratum, with $n$ being the number of iterations).}. Note that there are sets of default rules that do not admit a Z ordering, such default sets are called “inconsistent” in \cite{GoPe91}. In this paper, we restrict ourselves to consistent default sets. From a stratified default base lexicographic-entailment \cite{BCDLP93,Lehmann95} is a non-monotonic inference relation which imposes that the more specific the rules, the more mandatory it is to comply with them:

\begin{defi}[Lex-inference \cite{BCDLP93}]\label{defLex}
Let $\Delta=\Delta_1\cup \cdots \cup \Delta_n$ be a stratified default base with $n$ strata ordered from the most specific strata $\Delta_1$ to the least specific one $\Delta_n$, and let $A$ and $B$ be two subsets of $\Delta$, and $\alpha$, $\beta$ be two formulas of $\L$, 
\begin{itemize}
\item Notations: $str$ (for ``strict’’) is a function that translates a set of default rules into a set of formulas of $\L$, i.e., $str(A)=\bigcup_{\alpha\leadsto\beta\in A}\{\neg \alpha \vee\beta\}$. For all $i\in[1,n]$, and any $E\subseteq \Delta$, $E_i$ denotes the $i$th strata of $E$: $E_i=E\cap \Delta_i$. 
\item $A$ is \emph{Lex-preferred} to $B$ given $\Delta$, denoted $A \succ_\Delta B$, \\ 
\begin{tabular}{p{5.3cm}r}
&iff there exists $k\in [1,n]$ s.t. $\left\{\begin{array}{l}
|A_k|>|B_k| \mbox{ and}\\
\forall i < k,  |A_i|=|B_i|
\end{array}\right.$\end{tabular}
\item $A$ is a \emph{Lex-preferred} $\alpha$-consistent subbase of $\Delta$ if
$A\subseteq \Delta$ and $str(A)\cup \{\alpha\}\not\models \bot$ and for any $B\subseteq\Delta$ s.t. $str(B)\cup \{\alpha\}\not\models \bot$, $B\not\succ_\Delta A$ holds
\item $\alpha \entails_{\Delta} \beta \mbox{ iff for any Lex-preferred $\alpha$-consistent subbase } B$ of $\Delta, str(B)\cup \{\alpha\}\models \beta$
\end{itemize}
\end{defi}

\begin{exs}{exboite} Let us consider that the common knowledge about the world consists only in the default persistence of the fluents $box$, $empty$ and $visible$ and on the default effects of the occurrences of events $A$, $E$ and $C$ when their preconditions hold.\\
$\Delta=\left\{\begin{array}{ll}
\neg box_0 \leadsto\neg box_1 &
(A_0 \vee E_0)\land \neg box_0\leadsto box_{1} \\
box_0\leadsto box_1 &
C_0\land \neg visible_0\leadsto visible_1 \\
\neg empty_0\leadsto\neg empty_1 &
C_0\land \neg visible_0\land empty_0\leadsto \neg visible_1\\
empty_0\leadsto empty_1 &
\neg box_1 \leadsto\neg box_2\\
\neg visible_0 \leadsto \neg visible_1 & ...\\
visible_0\leadsto visible_1 &  C_2\land \neg visible_2\land empty_2\leadsto \neg visible_3
\end{array}\right\}$

System Z will give a stratification in three strata where all persistence rules (of the form $v_t\leadsto v_{t+1}$ or $\neg v_t\leadsto \neg v_{t+1}$) are in the least specific stratum $\Delta_3$ (since they are tolerated by all the other rules). As seen before, $(A_0\vee E_0) \wedge \neg box_0\leadsto box_1$ describes an exception to the persistence of $\neg box$, just as $C_0\land \neg visible_0\leadsto visible_1$ describes an exception to the persistence of $\neg visible$ which leads us to place them in $\Delta_2$, the latter itself admits an exception described by rule $C_0\land \neg visible_0\land empty_0\leadsto \neg visible_1$ making it the most specific rule thus placed in $\Delta_1$ by System Z algorithm. At the end, we get:\\
$\Delta_1=\left\{\!\!\!\begin{array}{l}
C_{t}\land \neg visible_{t} \land empty_{t}\leadsto \neg visible_{t+1}\end{array}\right\}_{t\in\{0,1,2\}}$\\ 
$\Delta_2=\left\{\!\!\!\begin{array}{l}
C_{t}\land \neg visible_{t}\leadsto visible_{t+1}\\
(A_t\vee E_t) \wedge \neg box_t\leadsto box_{t+1} 
\end{array}\right\}_{\!\!\scriptsize t\in\{0,1,2\}}$ 
$\Delta_3=\left\{\!\!\!\begin{array}{l}
v_t\leadsto v_{t+1}\\ 
\neg v_t\leadsto \neg v_{t+1}\\
\end{array}\right\}_{\!\!\!\!\scriptsize\begin{array}{l}
t\in\{0,1,2\}\\v\in\{box,empty,visible\}\end{array}}$

Using lexicographic inference we get: $\neg box_0\entails_\Delta \neg box_1$ and $\neg box_0 \wedge (A_0 \vee E_0) \entails_\Delta box_1$, meaning that a priori if there was no box at time 0, there is no box at time 1, but knowing that either $A$ or $E$ has placed a box makes it more plausible that there is a box at time 1. 

Note that in this example, for the sake of simplicity, we want to make a closed world assumption (CWA) in order to express that the only possible way to change the variable $box$ (respectively $visible$) from false to true is the occurrence of $A$ or $E$ (respectively the performance of action $C$): $$\CWA=\{(\neg box_t \wedge box_{t+1}) \to (A_t \vee E_t), (\neg visible_t \wedge visible_{t+1}) \to C_t\}_{\scriptsize
t\in\{0,1,2\}}$$
From the set of default rules and the close world assumption, we can then obtain: $\{box_1\}\cup \CWA \entails_\Delta box_0$ meaning that the most plausible interpretation is that when there is a box at time point 1 it means that there was already a box at time 0. However, if we know that there were no box at time 0 then $\{box_1,\neg box_0\}\cup \CWA \entails_\Delta (A_0 \vee E_0)$ 
\end{exs}

We choose to use the lexicographic entailment in this paper, because \cite{BCDLP93} have shown that it is a powerful non-monotonic inference relation that satisfies the set of rational properties called  System P. The System P, introduced by Kraus, Lehmann and Magidor \cite{KLM90}, gathers properties that should follow rationally when one wants to deduce new inferences from a set of existing inference rules.
The following definition describes an agent epistemic states via the pieces of information that she believes.
\begin{defi}[Agent epistemic state and inference]
A user is represented by a tuple $B=(F,B_\L, B_\Delta)$ composed of a set $F\subseteq\L$ of formulas representing facts, and two sets $B_\L\subseteq \L$ and $B_\Delta$ respectively representing the strict and default rules known by the agent, the default rules of $B_\Delta$ are expressions of the form $\alpha\leadsto \beta$ with $\alpha,\beta\in\L$.

When $F\cup B_\L$ and $B_\Delta$ are both consistent\footnote{Here consistent is not used with the same meaning: for the propositional formulas it means classical logic consistency, while for the default rules base it means that the base can be stratified.}, the user is equipped with an inference relation between formulas of $\L$ denoted $\entails_B$ defined by:
$$\alpha \mbox{ $\entails_B$ } \beta \quad \mbox{ iff }\quad \begin{array}{|l}
\{\alpha\}\cup F\cup B_\L \mbox{ is consistent and}\\
\mbox{ for any Lex-preferred } (\alpha\wedge\bigwedge_{\varphi\in {F\cup B_\L}}\varphi) \mbox{-consistent subbase } A\in B_\Delta,\\
\hfill A\cup \{\alpha\}\cup F\cup B_\L\models \beta\end{array}$$ 

In the following, $\entails_B \varphi$ is a shortcut for $\top \entails_B \varphi$.
\end{defi}

In order to formally introduce  curiosity, we need to define awareness. This will be done by simply stating that an agent is aware of a variable if this variable appears in the facts contained in its epistemic state, and we assume that when an agent is aware of a variable then it becomes also aware of every variable of the strict or default rules of its epistemic state containing this variable (mimicking a kind of introspection). We use the notation $v\inn \varphi$ to express that the variable $v$ appears in the formula $\varphi$, this notation can be applied to variables of $\V_T$ as well as of $\V$. 

\begin{defi}[awareness]\label{defawareness} An agent represented by $B=(F,B_\L, B_\Delta)$ is \begin{itemize}
\item \emph{aware of a variable} $v\in \V$ if 
\begin{itemize} 
\item there is a formula $\varphi\in F$ s.t. $v\inn\varphi$ or
\item there is a formula $\varphi\in B_\L\cup str(B_\Delta)$ s.t. $v\inn\varphi$ and there is a variable $v'\inn\varphi$ of which the agent is aware; and
\end{itemize}
\item \emph{aware of a formula} $\varphi\in \L$ iff for any variable $v_t\inn\varphi$, the agent is aware of $v$.
\end{itemize}
\end{defi}

\begin{exs}{exboite}
Let us consider that the epistemic state of agent C is $(\emptyset,\CWA,\Delta=\Delta_1\cup\Delta_2\cup\Delta_3)$, in this case it does not know any fact, which means that it is not aware of anything. Assume now that at time point 1, our agent Cecilia comes to her office and sees a box on her desk, then the epistemic state of agent C is $(\{box_1\},\CWA,\Delta)$. In this state, it is aware that a box is on the desk, moreover by introspection its is aware of the possibility to open it due to rule concerning $C$, the possibility that Albert or Erwin are able to put it on the desk, the possibility that this box is $empty$ or that something inside of it could be $visible$.
\end{exs}

The following definition enables us to keep only formulas that do not concern a time point later than a given time point $t$, i.e., keep the formulas such that all their variables are indexed by time points no later than $t$. 
\begin{defi}[epistemic state until $t$]\label{defuntil}
Given an epistemic state $(F, B_\L,B_\Delta)$ and a time point $t\in [0,N]$, the \emph{epistemic state until $t$}, denoted $B\until{t}=(F\until{t},B_\L\until{t}, B_\Delta\until{t})$, is defined by:
$F\until{t}=\{\varphi \in F|\mbox{ for all } v_{t'}\inn\varphi, t'\leq t\}$, 
$B_\L\until{t}=\{\varphi \in B_\L|\mbox{ for all } v_{t'}\inn\varphi, t'\leq t\}$,
$B_\Delta\until{t}=\{\delta \in B_\Delta|\mbox{ for all } v_{t'}\inn str(\{\delta\}), t'\leq t\}$.
\end{defi}

In the following, we use $[\varphi]_{<t}$  (and respectively $[\varphi]_{\leq t}$, $[\varphi]_{> t}$, $[\varphi]_{\geq t}$ and $[\varphi]_{t}$) to denote that $\varphi$ is a formula containing only variables indexed by time points earlier than $t$ (resp. earlier than or equal to $t$, strictly later than $t$, later than or equal to $t$, equal to $t$). 
\begin{remark} For any formula $\varphi\in F\until{t}\cup B_\L\until{t}\cup str(B_\Delta\until{t})$, $[\varphi]_{\leq t}$ holds.
\end{remark}
An agent is curious about a formula at time point $t$ if according to its epistemic state until $t$ it is aware of this formula but it is not able to deduce its truth value at time $t$.
\begin{defi}[curiosity]\label{defcurious} An agent with state $B$ is  \emph{curious about $\varphi\in \L$} at $t\in T$ if, 
according to $B\until{t}$, it is aware of $\varphi$ and  
$\entailsnot_{B\until{t}} \varphi$ and 
$\entailsnot_{B\until{t}} \neg \varphi$.
\end{defi}

\begin{ex}\label{exboite2}
Coming back to Example \ref{exboite}, 
the epistemic state of C being $B=(\{box_1\}$, $\CWA$, $\Delta)$, its state at 0 is $B\until{0}=(\emptyset,\emptyset,\emptyset)$, meaning that at 0 it is aware of nothing, thus according to Definition \ref{defcurious} it is not curious about anything at 0.
In the epistemic state $B$, she first thinks that
$\{box_1\}\cup \CWA \entails_\Delta box_0$. However, she remembers that there was no box on her desk at time 0 before she left her office. Meaning that her epistemic state is now  $B'=(\{\neg box_0, box_1\},\CWA,\Delta)$ which enables her to draw the inference $\{\neg box_0,box_1\}\entails_{B'} (A_0 \vee E_0)$, however there is no way of knowing which of Albert or Bernard (or both) dropped off the box. More formally, we can say that Cecilia is curious about the possibility that Albert dropped off the box at time 0 because she is aware of this possibility and $\{\neg box_0,box_1\}\entailsnot_{B'} A_0$ and $\{\neg box_0,box_1\}\entailsnot_{B'} \neg A_0$.

Now if we consider that Albert told Cecilia that he placed a box on her desk at 0 before she entered her office. 
In that case, the epistemic state of Cecilia is $B''=(\{A_0$, $\neg box_0$, $box_1\},\CWA,\Delta)$, 
there is no more ambiguity as she knows who put it there, hence she is not curious about $A_0$.
\end{ex}


To define suspense, we propose for this formalization to use Baroni's description of \textit{primary suspense}~\cite{baroni_2007} which relies solely on temporal and belief factors. 
Baroni also describes other types of suspense involving different emotional components (empathy and identification with a protagonist for instance). These components affect suspense by strengthening the intensity of curiosity, and we will leave them for further study at the time being.
The following definition expresses that an agent feels suspense about a formula $\varphi$ when this agent is curious about it at time $t$, and thinks that it is not impossible for facts or events (below denoted $\psi$) to come to light and reveal the truth of $\varphi$ (satisfying curiosity about it at last).  
\begin{defi}[suspense] An agent represented by an epistemic state $B=(F,B_\L,B_\Delta)$ feels \emph{suspense about $\varphi\in\L$ at time point $t$} if 
\begin{enumerate}
\item according to $B$, the agent is curious about $\varphi$ at time $t$ and 
\item there is a formula $\psi\in\L$ such that $[\psi]_{>t}$ and $F\until{t} \cup B_\L \cup\{\psi\}$ consistent and
\item there is $t'>t$ s.t. either $\entails_{B'} \varphi_{t'}$ or $\entails_{B'} \neg\varphi_{t'}$ holds, with $B'=(F\cup\{\psi\},B_\L,B_\Delta)$.
\end{enumerate}
\end{defi}

\begin{ex}\label{exboite3}
In the context of Example \ref{exboite}, assume now that agent $C$ has the following epistemic state $B=(\{\neg box_0,box_1,\neg visible_1\},\CWA,\Delta)$. Here, at time 1 agent C is aware of the box, she is also aware that it is either empty or not, but has no way at this time to know which is true. Formally, $\entailsnot_B empty$ and $\entailsnot_B \neg empty$. Hence she is curious about the variable $empty$ at time point 1. Still according to Definition \ref{defawareness}, the agent is also aware of the formulas $(C_2\land \neg visible_2\leadsto visible_3)$ and $(C_2\land \neg visible_2\land empty_2\leadsto \neg visible_3)$. Meaning she is aware she will know the content of the box once she opens it. 

More precisely, the formula $\psi=C_2\wedge visible_3$ can be added to the facts of the epistemic state because $\{\neg box_0, box_1, \neg visible_1\} \cup \CWA \cup \{C_2 \wedge  visible_3\}$ is consistent. Now, $B'=(\{\neg box_0$, $box_1$, $\neg visible_1$, $C_2$, $visible_3\},\CWA,\Delta)$ yields $\entails_{B'} \neg empty_2$. Hence Cecilia feels suspense at time 1 about the truth value of $empty$.
\end{ex}
 
In order to formalize surprise, following \cite{DuPr23}, we propose to exploit our non-monotonic setting that enables agents to imagine several more or less plausible situations, i.e., enables them to incorporate new contradicting information by revising previous conclusions. This is required in order to avoid locking the agent in a state of total incomprehension. Surprise can then be defined by the occurrence of a formula that was unexpected but which is completely plausible.

\begin{defi}[surprise] An agent represented by $B=(F,B_\L,B_\Delta)$ is \emph{surprised at time $t$ about a formula $\varphi\in \L$} if 
$\varphi\in F\until{t}$ and
$B\until{t}$ is consistent ($\varphi$ occurred and it was not impossible) and
$B'=(F\until{t-1},B_\L\until{t},B_\Delta\until{t})$ is such that: $\entails_{B'} \neg\varphi$ ($\varphi$ was unexpected)
\end{defi}

\begin{exs}{exboite2}
Cecilia is surprised to find the box at time 1.
    Indeed, given the epistemic state $B=(\{\neg box_0,box_1\},\CWA,\Delta)$, before seeing the box at time 1, the persistence of $\neg box_0$ into $\neg box_1$ was the most plausible evolution. More formally, we can check that $box_1\in F$ and $B'=(\{\neg box_0\},\CWA\until{1},\Delta\until{1})$ is such that $\entails_{B'} \neg box_1$, and $B$ is consistent (hence $B\until{1}$ as well).  
\end{exs}

\section{Properties and graduality}\label{sec:properties}
In this section we show several simple properties relating the three emotions, moreover we establish the computational complexity of their detection. 

\subsection{Properties derived from the definitions}
An agent who knows nothing is aware of nothing.
\begin{prop}\label{propvide}
If the epistemic state of an agent has no fact, i.e,  $B=(\emptyset, B_\L,B_\Delta)$ then the agent is not aware of any variable. \end{prop}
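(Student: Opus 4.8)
The plan is to argue by contradiction, appealing directly to the recursive structure of Definition~\ref{defawareness}. Suppose, for contradiction, that the agent $B=(\emptyset, B_\L, B_\Delta)$ is aware of some variable. Then the set $S$ of variables of which the agent is aware is nonempty. My strategy is to show that the awareness predicate, restricted to an agent with $F=\emptyset$, can never be ``triggered'' in the first place: the base case of the recursion is vacuously unsatisfiable, and the inductive clause cannot bootstrap awareness out of nothing.

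Concretely, I would examine the two clauses defining ``aware of a variable $v$''. The first clause requires a formula $\varphi\in F$ with $v\inn\varphi$; since $F=\emptyset$, no such $\varphi$ exists, so this clause is never satisfied for any $v$. The second clause requires a formula $\varphi\in B_\L\cup str(B_\Delta)$ with $v\inn\varphi$ together with the existence of another variable $v'\inn\varphi$ of which the agent is \emph{already} aware. The key observation is that this clause is purely inductive: it can only certify awareness of $v$ if awareness of some $v'$ has been established by a prior application of one of the two clauses. Since the only non-inductive (base) clause is the first one, and that clause is inert when $F=\emptyset$, there is no ground from which the induction can start.

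To make this rigorous I would phrase awareness as a least fixed point. Define $\mathrm{Aware}$ as the smallest set of variables closed under the two clauses. Equivalently, $\mathrm{Aware}=\bigcup_{k\geq 0} S_k$ where $S_0=\{v\in\V \mid \exists\varphi\in F,\ v\inn\varphi\}$ is the set of variables witnessed by the base clause, and $S_{k+1}=S_k\cup\{v\in\V \mid \exists\varphi\in B_\L\cup str(B_\Delta),\ v\inn\varphi \text{ and } \exists v'\inn\varphi,\ v'\in S_k\}$. When $F=\emptyset$ we have $S_0=\emptyset$, and then a trivial induction on $k$ shows $S_k=\emptyset$ for all $k$: if $S_k=\emptyset$, the defining condition for $S_{k+1}$ demands a variable $v'\in S_k=\emptyset$, which is impossible, so $S_{k+1}=S_k=\emptyset$. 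Hence $\mathrm{Aware}=\emptyset$, contradicting the assumption that the agent is aware of some variable.

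I do not anticipate a genuine obstacle here, as the result is essentially a well-foundedness observation; the only subtlety worth stating explicitly is the interpretation of Definition~\ref{defawareness} as a least fixed point rather than some larger set closed under the clauses. This matters because the clauses, read naively as a non-inductive specification, would admit spurious self-supporting sets (e.g. a set where awareness of $v$ is justified by awareness of $v'$ and vice versa within a single rule); taking the \emph{smallest} such set rules these out and is the reading consistent with the intended ``introspection'' semantics. Once that convention is fixed, the empty base case propagates through the fixed-point construction and the proposition follows immediately.
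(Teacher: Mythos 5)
Your proposal is correct and rests on the same key observation as the paper's (one-line) proof: since $F=\emptyset$, the base clause of Definition~\ref{defawareness} can never fire, and the inductive clause cannot bootstrap awareness from nothing. Your explicit least-fixed-point formulation is a more careful rendering of the same argument, and your remark that the definition must be read as a \emph{smallest} set closed under the clauses (to exclude self-supporting awareness sets) is a genuine subtlety that the paper's proof leaves implicit.
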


\begin{proof} Even if $B_\L$ or $B_\Delta$ are non-empty, there is no awareness since no variable appears in $F$.
\end{proof}

This kind of agent is not curious nor able to feel suspense since curiosity requires awareness, and suspense requires curiosity.
\begin{corollary}[of Proposition \ref{propvide}] If the epistemic state of an agent has no fact, i.e., $B=(\emptyset, B_\L,B_\Delta)$, then the agent is not curious and does not feel suspense about any formula at any time point.
\end{corollary}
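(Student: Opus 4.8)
The plan is to derive both assertions directly from Proposition~\ref{propvide}, exploiting the fact that curiosity is built on awareness and suspense is built on curiosity. The only genuine subtlety is that curiosity and suspense are evaluated not on $B$ itself but on the time-restricted state $B\until{t}$, so I must first check that the emptiness of the fact set survives this restriction.

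First I would observe that, by Definition~\ref{defuntil}, $F\until{t}=\{\varphi\in F \mid \ldots\}\subseteq F$, so since $F=\emptyset$ we get $F\until{t}=\emptyset$ for every $t\in[0,N]$. Hence $B\until{t}=(\emptyset, B_\L\until{t}, B_\Delta\until{t})$ is again an epistemic state with no fact, whatever $B_\L\until{t}$ and $B_\Delta\until{t}$ may contain.

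Next I would apply Proposition~\ref{propvide} to the state $B\until{t}$: since its fact set is empty, the agent is, according to $B\until{t}$, aware of no variable of $\V$. By the second clause of Definition~\ref{defawareness}, being aware of a formula $\varphi$ requires awareness of $v$ for every variable $v_{t'}\inn\varphi$; thus the agent is aware of no formula containing at least one variable. Since the first condition of Definition~\ref{defcurious} demands awareness of $\varphi$ according to $B\until{t}$, curiosity fails for every such $\varphi$ at every $t$. The only formulas left to dispatch are the variable-free ones (the constants $\top$ and $\bot$ and their Boolean combinations): for these the truth value is fixed, so one of $\entails_{B\until{t}}\varphi$ or $\entails_{B\until{t}}\neg\varphi$ holds, which again defeats curiosity. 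This establishes that the agent is curious about no formula at any time point.

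Finally, for suspense I would invoke the first clause of the suspense definition, which requires that the agent be curious about $\varphi$ at $t$. As this clause can never be satisfied, the conjunction defining suspense is never satisfied either, independently of its consistency and future-entailment conditions; hence the agent feels suspense about no formula at any time point. The main obstacle here is not computational but a matter of conceptual bookkeeping: one must take care to apply Proposition~\ref{propvide} to the restricted state $B\until{t}$ rather than to $B$, and to handle the degenerate variable-free formulas separately — both steps being routine once the emptiness of $F\until{t}$ has been noted.
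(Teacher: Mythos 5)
Your proof is correct and follows essentially the same route as the paper, which justifies the corollary simply by observing that curiosity requires awareness (impossible by Proposition~\ref{propvide}) and suspense requires curiosity. Your additional bookkeeping --- checking that $F\until{t}\subseteq F=\emptyset$ so Proposition~\ref{propvide} applies to the restricted state, and dispatching the variable-free formulas via the entailment conditions --- only makes explicit details the paper leaves implicit.
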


The following proposition states that an omniscient agent (i.e., an agent with complete information about the world) is never curious nor able to feel suspense.
\begin{prop}\label{propsaittout}
If the epistemic state $B=(F, B_\L,B_\Delta)$ of an agent admits only one most plausible interpretation in $\Omega$, then for any finite formula, there is no time point where the agent is curious or feels suspense about it.
\end{prop}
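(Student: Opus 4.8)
The plan is to first reduce suspense to curiosity, then to rule out curiosity by reading the hypothesis semantically. Since the definition of suspense requires, as its first clause, that the agent be curious about $\varphi$ at $t$, it suffices to establish that the agent is never curious about any formula at any time point; the absence of suspense then follows immediately.

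First I would reformulate the hypothesis in terms of the lexicographic semantics. Writing $\omega^*$ for the unique most plausible interpretation of $B$, the correspondence between Lex-preferred consistent subbases and most preferred models yields, for every $\psi\in\L$, that $\entails_B\psi$ holds if and only if $\psi$ is satisfied by every most plausible interpretation, i.e. if and only if $\omega^*\models\psi$. Since $\omega^*$ is a complete interpretation of $\V_T$, it assigns a definite truth value to every variable and hence decides every formula: for each $\varphi$ exactly one of $\omega^*\models\varphi$ and $\omega^*\models\neg\varphi$ holds, so exactly one of $\entails_B\varphi$ and $\entails_B\neg\varphi$ holds. Relative to the full state $B$, the undecided situation demanded by Definition~\ref{defcurious} can therefore never occur.

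The delicate point — and the step I expect to be the main obstacle — is that Definition~\ref{defcurious} tests curiosity at $t$ against the truncated state $B\until{t}$ and its inference relation $\entails_{B\until{t}}$, not against $B$ itself. Truncation discards every fact and every strict or default rule mentioning a time point later than $t$, which can enlarge the set of most plausible interpretations and, a priori, spoil the determinacy obtained above: a future observation may pin down the value of a present variable that $B\until{t}$ alone leaves open. The heart of the proof is thus a transfer lemma stating that whenever the agent is \emph{aware} of $\varphi$ according to $B\until{t}$, the truncated state still decides $\varphi$, i.e. $\entails_{B\until{t}}\varphi$ or $\entails_{B\until{t}}\neg\varphi$ holds. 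I would attempt to prove it by showing that every most plausible interpretation of $B\until{t}$ agrees with $\omega^*$ on all variables that the awareness condition of Definition~\ref{defawareness} forces into $F\until{t}\cup B_\L\until{t}\cup str(B_\Delta\until{t})$, thereby fixing the value of any aware formula; controlling how awareness propagates through the rules that survive truncation is exactly the subtle part, and I expect this to require the temporal structure of the framework (persistence defaults, closed-world assumption, monotone revelation of facts) rather than holding for a fully arbitrary $(F,B_\L,B_\Delta)$.

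Granting such a lemma, the conclusion is immediate. If the agent were curious about some $\varphi$ at some $t$, then by Definition~\ref{defcurious} it would be aware of $\varphi$ according to $B\until{t}$ while neither $\entails_{B\until{t}}\varphi$ nor $\entails_{B\until{t}}\neg\varphi$ held, contradicting the transfer lemma. Hence the agent is curious about no formula at any time point, and — since curiosity is a prerequisite for suspense — feels suspense about no formula either.
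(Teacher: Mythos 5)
Your reduction of suspense to curiosity, and your semantic reading of the hypothesis (a unique most plausible interpretation $\omega^*$ decides every formula, so exactly one of $\entails_B\varphi$ and $\entails_B\neg\varphi$ holds), is precisely the content of the paper's own two-sentence proof: to be curious there must be a variable of unknown truth value, hence at least two equally most plausible interpretations. Up to that point you reproduce the intended argument.

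Where you diverge is in noticing --- correctly --- that Definition~\ref{defcurious} evaluates both awareness and the two non-inferences against the truncated state $B\until{t}$ rather than against $B$, and that truncation can reintroduce indeterminacy. Your proof is therefore incomplete as written: the ``transfer lemma'' is stated but not proved, and you yourself concede it is unlikely to hold for arbitrary $(F,B_\L,B_\Delta)$. That concession is well founded. Take $\V=\{a,b\}$, $T=\{0,1\}$, $F=\{a_0\vee b_0\}$, $B_\L=\{\neg a_1,\; a_0\equiv a_1,\; b_1,\; b_0\equiv b_1\}$ and $B_\Delta=\emptyset$: then $F\cup B_\L$ has the single model $(\neg a_0, b_0, \neg a_1, b_1)$, so $B$ admits a unique most plausible interpretation, yet $B\until{0}=(\{a_0\vee b_0\},\emptyset,\emptyset)$ leaves the agent aware of $a$ and undecided about $a_0$, i.e.\ curious at time $0$. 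So the lemma you would need is false in the stated generality, and the proposition holds only under an additional hypothesis (e.g.\ that every $B\until{t}$ also admits a unique most plausible interpretation) or under a reading of curiosity relative to $B$ itself. To be fair to you, the paper's own proof silently ignores the truncation and thus has exactly the gap you isolated; your write-up is the more careful of the two, but it does not close the gap either.
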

\begin{proof} In order to be curious, there should exist at least one variable whose truth value is unknown. Hence there should be at least two interpretations that are equally most plausible.
\end{proof}

Because surprise occurs when the agent expects something and then the opposite happens, it means that it is not curious about it (because the surprise makes it know it).
\begin{prop}\label{propsurprisnoncurieux}
Given an epistemic state $B$ of an agent, if the agent is surprised about $\varphi$ at time $t$ then the agent is not curious about $\varphi$ neither at time $t-1$ nor at time $t$.
\end{prop}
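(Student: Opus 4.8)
The plan is to split the statement into its two independent claims --- non-curiosity at $t$ and non-curiosity at $t-1$ --- and treat them in that order, since the first is immediate while the second carries all the content.

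For non-curiosity at $t$, I would read off from the definition of surprise that $\varphi\in F\until{t}$ and that $B\until{t}$ is consistent, so in particular $F\until{t}\cup B_\L\until{t}$ is classically consistent. Since $\varphi$ is literally one of the facts of $F\until{t}$, it is a conjunct of $\bigwedge_{\psi\in F\until{t}\cup B_\L\until{t}}\psi$; hence every Lex-preferred consistent subbase $A$ of $B_\Delta\until{t}$ satisfies $A\cup\{\top\}\cup F\until{t}\cup B_\L\until{t}\models\varphi$, and together with the consistency of $\{\top\}\cup F\until{t}\cup B_\L\until{t}$ this gives $\entails_{B\until{t}}\varphi$. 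This contradicts the clause $\entailsnot_{B\until{t}}\varphi$ of Definition~\ref{defcurious}, so the agent is not curious about $\varphi$ at $t$; this step is routine.

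For non-curiosity at $t-1$ I would start from the third clause of surprise, $\entails_{B'}\neg\varphi$ with $B'=(F\until{t-1},B_\L\until{t},B_\Delta\until{t})$, and first pin down the shape of $\varphi$: if every variable of $\varphi$ were indexed by some $t'\le t-1$, then $\varphi\in F\until{t-1}$, so $F\until{t-1}\models\varphi$, and since every Lex-preferred $(\bigwedge_{\psi\in F\until{t-1}\cup B_\L\until{t}}\psi)$-consistent subbase stays consistent with the facts, none could entail $\neg\varphi$ without inconsistency, contradicting $\entails_{B'}\neg\varphi$. Hence $\varphi$ must contain at least one variable indexed exactly by $t$, and the remaining task is to show that, tested against $B\until{t-1}$, one of the three clauses of Definition~\ref{defcurious} fails for $\varphi$.

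This last reduction is where I expect the main obstacle. The expectation $\neg\varphi$ is produced in $B'$, which reasons with the full dynamics $B_\L\until{t}$ and $B_\Delta\until{t}$, whereas the curiosity test at $t-1$ reasons with $B\until{t-1}=(F\until{t-1},B_\L\until{t-1},B_\Delta\until{t-1})$. By Definition~\ref{defuntil} the rules in $B_\L\until{t}\setminus B_\L\until{t-1}$ and $B_\Delta\until{t}\setminus B_\Delta\until{t-1}$ are exactly those whose strict form mentions time $t$, i.e. the persistence and effect rules bridging $t-1$ to $t$, and these are precisely the rules that constrain the time-$t$ variable of $\varphi$ isolated above. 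Hence $\entails_{B'}\neg\varphi$ cannot be transported verbatim to $\entails_{B\until{t-1}}\neg\varphi$: with $B\until{t-1}$ alone the time-$t$ variables of $\varphi$ are left free, so generically neither $\varphi$ nor $\neg\varphi$ is entailed and clauses (2)--(3) of curiosity still hold. I would therefore try to close the argument along one of two routes --- either by reading the curiosity test at $t-1$ with the dynamics rules up to $t$ available, so that the relevant base is effectively $B'$ and $\entails_{B'}\neg\varphi$ directly violates clause (3); or by proving that the agent is not aware of $\varphi$ according to $B\until{t-1}$. The awareness route is delicate, because by Definition~\ref{defawareness} one only needs awareness of the base variables of $\varphi$, and such a base variable may already surface at an earlier time in $F\until{t-1}$ (as $box$ does in Example~\ref{exboite}); settling whether the expectation genuinely carries over to $t-1$, or whether an extra hypothesis on $\varphi$ is needed, is the crux, and the point where the bookkeeping of $\until{t}$ versus $\until{t-1}$ becomes decisive.
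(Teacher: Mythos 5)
Your first half (non-curiosity at $t$) is exactly the paper's argument: $\varphi\in F\until{t}$ together with the consistency of $B\until{t}$ forces $\entails_{B\until{t}}\varphi$, which contradicts the clause $\entailsnot_{B\until{t}}\varphi$ of Definition~\ref{defcurious}. No issue there.

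For the $t-1$ half, however, your attempt stops short of a proof: you isolate the question of whether $\entails_{B'}\neg\varphi$, with $B'=(F\until{t-1},B_\L\until{t},B_\Delta\until{t})$, can be transported to the base $B\until{t-1}=(F\until{t-1},B_\L\until{t-1},B_\Delta\until{t-1})$ that Definition~\ref{defcurious} actually uses, and you leave the choice between your two repairs open. The paper never performs this transport: its proof reads $\entails_{B'}\neg\varphi$ as ``the agent could infer the truth value of $\varphi$ at time $t-1$'' and concludes non-curiosity at $t-1$ in one line, i.e.\ it silently takes your first route and treats $B'$ as the epistemic state against which curiosity at $t-1$ is judged. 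Your obstacle is genuine under a literal reading of the definitions: in Example~\ref{exboite2}, with $B=(\{\neg box_0, box_1\},\CWA,\Delta)$ and $\varphi=box_1$, the base $B\until{0}$ contains no rule mentioning time $1$, the agent is still aware of the variable $box$ through the fact $\neg box_0$, and neither $box_1$ nor $\neg box_1$ is entailed from $B\until{0}$, so the literal Definition~\ref{defcurious} would make her curious about $box_1$ at time $0$ even though she is surprised about it at time $1$. So the crux you flagged is a real discrepancy between $B'$ and $B\until{t-1}$ that the paper's argument glosses over; but as a proof attempt yours is incomplete, since you neither adopt the paper's identification of the two bases nor establish the non-awareness (or additional hypothesis on $\varphi$) that your second route would require.
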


\begin{proof} Let us assume that $B=(F,B_\L,B_\Delta)$ is \emph{surprised at time $t$ about a formula $\varphi\in \L$}, it means that $\varphi \in F\until{t}$ and $B\until{t}$ is consistent and
$B'=(F\until{t-1},B_\L\until{t},B_\Delta\until{t})$ is such that: $\entails_{B'} \neg\varphi$. It means that the agent could infer the truth value of $\varphi$ at time $t-1$, hence she was not curious at $t-1$. Now since $\varphi\in F$ and $B\until{t}$ consistent then $\entails_{B\until{t}} \varphi$ hence she is not curious about it at $t$.
\end{proof}

This proposition shows that surprise and curiosity are antagonists in a given epistemic state, however we can imagine stories where the same event sequence may produce curiosity (e.g. by keeping some information hidden, namely the name of the murderer) when told in a given way and surprise when told differently (e.g. revealing this same information at start). 
 The following proposition shows the complexity class of the decision problems associated to awareness, curiosity, suspense and surprise. 
 
\begin{prop} Given an epistemic state $B$, a formula $\varphi\in\L$ and a time point $t\in T$,
\begin{itemize}
\item Deciding whether $B$ is aware of $\varphi$ at time point $t$ is linear
\item Deciding whether $B$ is curious or feels suspense or surprise about $\varphi$ at $t$ is $P^{\NP}$-complete.
\end{itemize}
\end{prop}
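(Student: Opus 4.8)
The plan is to establish the two bullets separately, handling the easy linear bound first and then building up the $P^{\NP}$-completeness of the three emotion-detection problems via a common structure. For awareness, I would argue directly from Definition~\ref{defawareness}: deciding awareness of $\varphi$ reduces to deciding awareness of each variable $v$ occurring in $\varphi$, and awareness of variables is a reachability/closure computation on the ``shares-a-variable'' relation seeded by the variables occurring in $F$. This closure can be computed by a single graph traversal over the formulas of $F \cup B_\L \cup str(B_\Delta)$, which touches each formula and each variable occurrence a bounded number of times, giving a linear pass in the size of the input $B$ together with $\varphi$; I would state this as the first step and keep it short.

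The core of the proof is the $P^{\NP}$-completeness claim, which I would split into membership and hardness. For membership in $P^{\NP}$, the key observation is that the underlying inference relation $\entails_{B\until{t}}$ (Definition~\ref{defLex}, lexicographic inference) is itself decidable using an $\NP$ oracle: checking whether a given subbase $A$ is $\alpha$-consistent is a $\SAT$ query, and identifying the \emph{Lex-preferred} $\alpha$-consistent subbases and verifying $str(B)\cup\{\alpha\}\models\beta$ for all of them is a polynomial computation equipped with such queries (this is the standard complexity of lexicographic entailment, which sits at the $\Delta_2^p = P^{\NP}$ level). Curiosity (Definition~\ref{defcurious}) is a boolean combination of one awareness test (linear, hence absorbed) and two non-entailment tests $\entailsnot_{B\until{t}}\varphi$ and $\entailsnot_{B\until{t}}\neg\varphi$, each a complement of a $P^{\NP}$ query and therefore still in $P^{\NP}$. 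For surprise I would note it likewise reduces to one entailment test $\entails_{B'}\neg\varphi$ plus consistency and membership checks of $\varphi$ in $F\until{t}$, all within $P^{\NP}$. Suspense is the delicate case: its Definition adds an existential quantifier over a formula $\psi$ with $[\psi]_{>t}$; I would argue that it suffices to search over a polynomially-bounded family of candidate witnesses $\psi$ (built from the finitely many relevant variables indexed by time points $t' > t$ up to $N$), so the existential can be resolved by a polynomial number of $P^{\NP}$-computations, keeping the whole problem in $P^{\NP}$.

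For hardness I would exhibit a reduction from a canonical $P^{\NP}$-complete problem. A natural choice is to reduce from deciding the truth value output by a polynomial-time machine making adaptive $\SAT$ queries, or more concretely from a problem such as determining whether the lexicographically-last satisfying assignment sets a designated variable to true, which is known to be $\Delta_2^p$-complete. The idea is to encode an arbitrary lexicographic-entailment instance (already known to be $P^{\NP}$-hard) into an epistemic state $B$ so that the agent is curious (respectively feels surprise/suspense) about a designated formula $\varphi$ at a designated time exactly when the encoded entailment query has a prescribed answer; the awareness precondition is made trivially satisfied by seeding $F$ with facts mentioning every relevant variable. The \textbf{main obstacle} I anticipate is the hardness direction for suspense and curiosity, since their definitions are not a single entailment query but a conjunction of a non-entailment with an awareness condition (curiosity) or an existential over witnesses (suspense): I must ensure the reduction forces the auxiliary conditions to hold unconditionally so that the hard $P^{\NP}$ content is genuinely carried by the single entailment/non-entailment test, and in the suspense case that the existentially-quantified $\psi$ can always be satisfied by a fixed benign witness that does not leak information about the query's answer. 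Verifying that these padding conditions do not accidentally trivialize or over-constrain the instance is where the care will be needed.
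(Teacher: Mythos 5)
Your membership argument is essentially the paper's: both of you reduce curiosity, suspense and surprise to a bounded number of lexicographic (non-)entailment tests plus consistency checks, and invoke the known $\Delta_2^p=P^{\NP}$ complexity of lexicographic inference (Eiter and Lukasiewicz) together with closure of $P^{\NP}$ under complement and polynomial combination; your treatment of awareness as a closure/reachability computation is actually more faithful to Definition~\ref{defawareness} than the paper's one-line ``membership of the variable in a set of formulas,'' since the definition is recursive. Where you genuinely diverge is on hardness: the paper gives no reduction at all --- it lists the component subproblems and writes ``hence the result,'' implicitly inheriting completeness from the completeness of lexicographic inference --- whereas you propose an explicit reduction from a $\Delta_2^p$-complete problem with padding to neutralize the awareness and witness side-conditions. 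That is more work than the paper does, and it is the work one would actually need to justify the word ``complete'' in the statement; your worry about making the auxiliary conditions hold unconditionally is exactly the right one.

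The one place where your plan (and, even more so, the paper's proof) is not yet an argument is the upper bound for suspense. The definition existentially quantifies over an arbitrary formula $\psi$ with $[\psi]_{>t}$, and the set of candidate witnesses --- even restricted to conjunctions of literals over the variables indexed by time points in $(t,N]$ --- is exponential, not polynomial. You assert that a polynomially-bounded family of witnesses suffices, but give no reason; absent such a lemma the natural classification of suspense is $\Sigma_2^p$-flavoured (guess $\psi$, then verify with an $\NP$ oracle) rather than $P^{\NP}$. The paper sidesteps this entirely by never mentioning how $\psi$ is found. If you carry this proof out, the suspense case is where you must either prove a small-witness property (e.g., that one may take $\psi$ to be a single completion of the relevant future variables determined by the lex-preferred subbases) or concede that only membership in a higher class is established.
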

\begin{proof} In order to check awareness about a variable, it is enough to check membership of this variable to a set of formulas, which is linear in the size of the epistemic state, this process should be repeated for all the variables of a formula to check formula awareness. Concerning curiosity, in addition to a test of awareness, it uses two lexicographic inference tests which have been shown to be in $P^{\NP}$ by \cite{EiLu00}. Suspense requires a curiosity check and a consistency check of the strict part of the base $B$, which is a SAT problem hence ${\NP}$-complete. It then requires several lexicographic inferences in order to find the time point where $\entails_{B'}\varphi_{t'}$ or $\entails_{B'}\neg\varphi_{t'}$ holds. Surprise requires a consistency check of the default base of $B$ (which is a $P^{\NP}$-complete problem according to \cite{EiLu00})  and a lexicographic inference, hence the result.
\end{proof}

The complexity $P^{NP}$ of these decision problems is due to the use of the lexicographic inference in their definition. Note that the upper bound ($N$) on time steps could relieve the computational complexity as obtained in traditional STRIPS planning \cite{Bylander94} where the complexity of certain decision problems drops from PSPACE-complete to NP-complete. Note also that formulation of AI planning in answer set programming gives rise to similar complexity \cite{SPBS23}. 
 
\subsection{Towards defining measures}
For further characterizing narrative tension, we need to quantify the intensity of the emotions generated in an agent when listening to a story. This section is a first attempt towards this goal. We propose three definitions of the emotional intensity of curiosity, suspense and surprise. In the following definition we propose to rely on findings from Trabasso and Sperry \cite{trabasso_1985} as a heuristic in order to evaluate the intensity of the curiosity. We first define the causal graph associated with an epistemic state as the one relating variables of $V_T$ with the links induced by the default rules and strict rules of the epistemic state.
\begin{defi}[causal graph] The \emph{causal graph} $\G_B$ induced by an epistemic state $B=(F,B_\L,B_\Delta)$ is a pair $(V_B, E_B)$ with 
\begin{itemize}
\item $V_B=\{v_t\in \V_T | v_t \inn \varphi, \varphi\in F\cup B_\L\cup str(B_\Delta)\}$ is the set of vertices of $\G_B$
\item $E_B= \!\!\!\begin{array}[t]{l}
\{(v_t,v'_{t'}) \in \V_T\times \V_T| v_t \inn \alpha, v_{t'}\inn \beta, \alpha\leadsto \beta \in B_\Delta\} \quad \cup\\
\{(v_t,v'_{t'}) \in \V_B\times \V_B \quad |\quad \{l_t\}\cup F\cup B_\L \models l'_{t'} \mbox{ with } l_t\in\{v_t,\neg v_t\}, l_t'\in\{v'_{t'},\neg v'_{t'}\}  \}
\end{array}$
\end{itemize}
\end{defi}

We illustrate this definition on the epistemic state of Cecilia until time point 1.

\begin{exs}{exboite3} 
Considering $B=(\{\neg box_0,box_1,\neg visible_1\},\CWA,\Delta)$, the causal graph induced by $B\until{1}$ is shown in Figure \ref{figcausalgraph}. 
\end{exs}


\begin{figure}
\begin{center}  \begin{tikzpicture}[xscale=1.8,yscale=1.2,>=stealth']
\foreach \n/\x/\y in {
{A_0}/4/0,
{E_0}/5/0,
{C_0}/6/0,
{box_0}/3/0,
{empty_0}/1/0,
{visible_0}/2/0,
{A_1}/4/-1,
{E_1}/5/-1,
{C_1}/6/-1,
{box_1}/3/-1,
{empty_1}/1/-1,
{visible_1}/2/-1
}
{\node[rectangle, rounded corners,draw] (\n) at (\x,\y) {\footnotesize$\n$};}
\foreach \orig/\dest in {
A_0/box_1,E_0/box_1,empty_0/visible_1,box_0/box_1,empty_0/empty_1,visible_0/visible_1, box_0/visible_1.north east}
{\draw[->] (\orig) edge  (\dest);}
\draw[->] (C_0) edge[out=240,looseness=1]  (visible_1.north east);
  \end{tikzpicture}
  \caption{Causal graph induced by the epistemic state $B\until{1}$}\label{figcausalgraph}
\end{center}
\end{figure}

\begin{defi}[curiosity intensity] Given an agent with state $B=(F,B_\L,B_\Delta)$ and \emph{curious about $\varphi\in \L$} at $t\in T$, her curiosity intensity level is $c_B(\varphi,t)=\sum_{\mbox{\small$v_{t'}\inn\varphi$}} deg(v_{t'})$ where $deg(x)$ is the degree of the node $x$ in the causal graph induced by $B$.
\end{defi}

\begin{exs}{exboite3} Given the epistemic state of Cecilia $B=(\{\neg box_0$, $box_1$, $\neg visible_1\}$, $\CWA$, $\Delta)$, she is the most curious about $\neg visible_1$ with intensity 5, denoted $c_B(\neg visible,1)=5$. Not that the degree of $visible_1$ is only four on Figure \ref{figcausalgraph} but there is a supplementary outgoing arc from $visible_1$ to $visible_2$ when considering $B$ instead of $B\until{1}$.
\end{exs}

Lets us now consider an example of suspense evolution. As previously explained, we base our definition only on beliefs and time. According to Baroni \cite{baroni_2007}, once curiosity is aroused then the suspense begins and lasts until it reaches a plateau, at which point it diminishes and gradually fades away, unless the suspense is resolved in the meantime. We propose to consider that the suspense profile of an agent is available under the form of a quadruplet $(\alpha,\beta,\gamma,\SMAX)$ where $\alpha$ is the duration before reaching the maximum of intensity $\SMAX$, $\beta$ the length of the plateau and $\gamma$ the descent duration (see Figure \ref{figtrapeze}).
\begin{figure}
\begin{tikzpicture}[>=stealth',xscale=.5,yscale=.15]
    \draw [<->,thick] (0,12) node (yaxis) [above] {Suspense intensity}
        |- node[below left] {0} (23,0) node (xaxis) [right] {$t$};
        \draw[line width=2pt,blue] (0,0) -- (6,0) node[below] {$t_0$} -- (6,3)  -- (12,10)   -- (18,10)  -- (21,0) -- (22,0);
        \draw[dashed, black] (0,10) node [left] {SMax} -- (12,10);
        \draw[dashed, black] (0,3) node [left] {$c$} -- (6,3);
        \draw[dashed, black] (12,-0.2)  -- (12,10);
        \draw[dashed, black] (18,-0.2)  -- (18,10);
        \draw[<->] (6,-3.5) -- node[below] {$\alpha$} (12,-3.5) ;
        \draw[<->] (12.1,-3) -- node[below] {$\beta$} (17.9,-3) ; 
        \draw[<->] (18,-3.5) -- node[below] {$\gamma$} (21,-3.5);
\end{tikzpicture}
\caption{Suspense intensity along time ($c$ being the level of curiosity felt at time $t_0$)}\label{figtrapeze}
\end{figure}
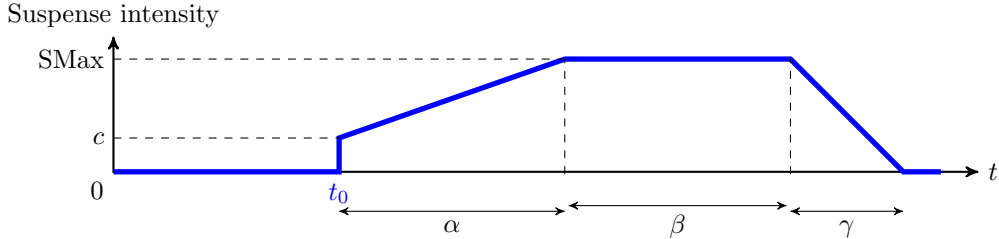
Thus, suspense intensity is a function of the curiosity intensity at the time it is first felt and of the duration between its triggering and its resolution. 
\begin{defi}[suspense intensity] Given an epistemic state $B=(F,B_\L,B_\Delta)$ and a suspense profile $p=(\alpha,\beta,\gamma,\SMAX)$ then the intensity of the suspense feeling at $t$ is $$s^p_B(\varphi,t)= \left\{\begin{array}{ll}
0 &\mbox{ if $t<t_0$}\\
\frac{\SMAX-c}{\alpha}(t -t_0)+ c&\mbox{ if $t\in[t_0,t_0+\alpha]$}\\
\SMAX &\mbox{ if $t\in[t_0+\alpha,t_0+\alpha+\beta]$}\\
-\frac{\SMAX}{\gamma}(t-t_0-\alpha-\beta) + \SMAX &\mbox{ if $t\in[t_0+\alpha+\beta,t_0+\alpha+\beta+\gamma]$}\\
 0&\mbox{ if $t\geq t_0+\alpha+\beta+\gamma$}
\end{array}\right.$$
where $t_0$ is the earliest time where the agent was curious about $\varphi$ and $c=c_B(\varphi,t_0)$ is the curiosity intensity at $t_0$.
\end{defi}

Note that in this definition, the suspense intensity  may only vary according to the profile of the agent and the duration. A more refined way to handle this would be to define a decreasing persistence of awareness, enabling the agent to forget a variable after some delay, it would be in accordance with the common knowledge that the suspense should be revived from time to time.

Concerning surprise intensity about a formula $\varphi$, we propose to adopt the point of view of Shackle \cite{Shackle1961} as done in \cite{DuPr23}, by assimilating it to the degree of impossibility of $\varphi$ (or equivalently the possibility degree of $\neg \varphi$). It amounts to finding the most specific rule that is violated by $\varphi \cup F\cup B_{\L}$, the more specific this rule, the more surprising $\varphi$ becomes\footnote{Such a definition is classical in possibility theory, and more specifically in the context where a default rule $\alpha \leadsto \beta$ is interpreted as a constraint on a possibility measure $\Pi$ see e.g., \cite{BDP1997}. This constraint being $\Pi(\alpha\wedge \beta)>\Pi(\alpha\wedge \neg \beta)$  expressing that when $\alpha$ is true, having $\beta$ true is more possible than $\beta$ false.}.

\begin{defi}[surprise intensity] Given an epistemic state $B=(F,B_\L,B_\Delta=\Delta_1\ldots\Delta_n)$ where there is a surprise at time $t$, the surprise intensity is $surp_B(\varphi,t)=n-i$, where $i$ is the most specific strata level, s.t. there is a rule $\alpha\leadsto\beta\in \Delta_i$ with $\{\varphi\wedge\alpha\}\cup F\cup B_\L\models \neg \beta$.
\end{defi}

The definitions of this section are a first step towards being able to compare stories with respect to the intensity of emotions they generate in the agent listening to them.  

\section{Conclusion}
This paper aims at providing a unified framework in which the three emotions at the
heart of narrative tension, namely curiosity, surprise, and suspense are formalized and their relationships clarified. This
framework is built on non-monotonic reasoning 
for representing compactly the default behavior of the world and also for simulating the
reasoning of an agent in front of a story. The use of non-monotonic reasoning
induces a cost in complexity: the detection problems associated with the three
emotions are in $P^{NP}$ (due to the use of lexicographic inference). For each
of the three emotions, we describe methods to evaluate their intensity.

While we illustrated our formalization by adopting the point of view of a single agent in a chronological story for the sake of clarity, it does not preclude its adaptability for storytelling using other points of views such as an extradiegetic narrator disclosing knowledge to the listener through a discourse that does not reflect the timeline of the story. 

To operationalize this model, we plan to investigate different frameworks that are equipped with solvers namely PDDL planning, Linear logic with Ceptre \cite{Martens15} and propositional default logic with TouIST \cite{touist}. Moreover, the inherent growing complexity of this problem for scaling to complex narratives requires further study about the granularity of story events, for instance inspired by discussions about the representation of causality~\cite{Mazlack_2004}.

\bibliography{bib}

\begin{thebibliography}{10}

\bibitem{adam_2009}
Carole Adam, Andreas Herzig, and Dominique Longin.
\newblock A logical formalization of the {{OCC}} theory of emotions.
\newblock {\em Synthese}, 168(2):201--248, May 2009.

\bibitem{AGM85}
Carlos~E Alchourr{\'o}n, Peter G{\"a}rdenfors, and David Makinson.
\newblock On the logic of theory change: Partial meet contraction and revision
  functions.
\newblock {\em The journal of symbolic logic}, 50(2):510--530, 1985.

\bibitem{aucher2013undecidability}
Guillaume Aucher and Thomas Bolander.
\newblock Undecidability in epistemic planning.
\newblock In F.~Rossi, editor, {\em Proc. of Int. Joint Conf. on Artificial
  Intelligence (IJCAI'2013)}, pages 27--–33, Beijing, China, 2013.

\bibitem{barber_2010}
Heather Barber and Daniel Kudenko.
\newblock Generation of dilemma-based interactive narratives with a changeable
  story goal.
\newblock In {\em 2nd International Conference on INtelligent TEchnologies for
  interactive enterTAINment}. ICST, 5 2010.

\bibitem{baroni_2007}
Rapha{\"e}l Baroni.
\newblock {\em {La tension narrative: suspense, curiosit{\'e} et surprise}}.
\newblock {Po{\'e}tique}. {\'E}d. du Seuil, Paris, 2007.

\bibitem{BCDLP93}
Salem Benferhat, Claudette Cayrol, Didier Dubois, J{\'e}r{\^o}me Lang, and
  Henri Prade.
\newblock Inconsistency management and prioritized syntax-based entailment.
\newblock In {\em Proc. of Int. Joint. Conf. on Artificial Intelligence
  (IJCAI'93)}, volume~93, pages 640--645, 1993.

\bibitem{BDP1997}
Salem Benferhat, Didier Dubois, and Henri Prade.
\newblock Nonmonotonic reasoning, conditional objects and possibility theory.
\newblock {\em Artif. Intell.}, 92(1-2):259--276, 1997.

\bibitem{bolander2011epistemic}
Thomas Bolander and Mikkel~Birkegaard Andersen.
\newblock Epistemic planning for single-and multi-agent systems.
\newblock {\em Journal of Applied Non-Classical Logics}, 21(1):9--34, 2011.

\bibitem{bonoli_2008}
Lorenzo Bonoli.
\newblock {Rapha{\"e}l Baroni, La tension narrative. Suspense, curiosit{\'e},
  surprise, Paris, Seuil, 2007}.
\newblock {\em Cahiers de Narratologie. Analyse et th{\'e}orie narratives}, 14,
  February 2008.

\bibitem{bosser_linear_2010}
Anne-Gwenn Bosser, Marc Cavazza, and Ronan Champagnat.
\newblock Linear {Logic} for {Non}-{Linear} {Storytelling}.
\newblock {\em ECAI 2010}, pages 713--718, 2010.
\newblock Publisher: IOS Press.

\bibitem{Bylander94}
Tom Bylander.
\newblock The computational complexity of propositional strips planning.
\newblock {\em Artificial Intelligence}, 69(1-2):165--204, 1994.

\bibitem{cardona-rivera_2012}
Rogelio~E. Cardona-Rivera, Bradley~A. Cassell, Stephen~G. Ware, and R.~Michael
  Young.
\newblock Indexter : {{A Computational Model}} of the {{Event-Indexing
  Situation Model}} for {{Characterizing Narratives}}, 2012.

\bibitem{Rivera_Jhala_Porteous_Young_2024}
Rogelio~E. Cardona-Rivera, Arnav Jhala, Julie Porteous, and R.~Michael Young.
\newblock The story so far on narrative planning.
\newblock {\em Proceedings of the International Conference on Automated
  Planning and Scheduling}, 34(1):489--499, May 2024.

\bibitem{carroll_2007}
No{\"e}l Carroll.
\newblock Narrative closure.
\newblock {\em Philosophical Studies}, 135(1):1--15, August 2007.

\bibitem{cheong_2015}
Yun-Gyung Cheong and R.~Michael Young.
\newblock Suspenser: {{A Story Generation System}} for {{Suspense}}.
\newblock {\em IEEE Transactions on Computational Intelligence and AI in
  Games}, 7(1):39--52, March 2015.

\bibitem{DoLM07}
Christophe Dousson and Pierre Le~Maigat.
\newblock Chronicle recognition improvement using temporal focusing and
  hierarchization.
\newblock In {\em IJCAI}, volume~7, pages 324--329. Citeseer, 2007.

\bibitem{DupindeSaintCyr08}
Florence {Dupin de Saint-Cyr}.
\newblock Scenario {{Update Applied}} to {{Causal Reasoning}}.
\newblock In {\em Principles of {{Knowledge Representation}} and {{Reasoning}}:
  {{Proceedings}} of the 11th {{International Conference}}, {{KR}} 2008}, pages
  188--197, January 2008.

\bibitem{DHLM20}
Florence Dupin~de Saint-Cyr, Andreas Herzig, J{\'e}r{\^o}me Lang, and Pierre
  Marquis.
\newblock {Reasoning About Action and Change}.
\newblock In Pierre Marquis, Odile Papini, and Henri Prade, editors, {\em {A
  Guided Tour of Artificial Intelligence Research}}, volume 1 / 3 of {\em
  Knowledge Representation, Reasoning and Learning}, pages 487--518. {Springer
  International Publishing}, May 2020.

\bibitem{DuLa11}
Florence {Dupin De Saint-Cyr} and J{\'e}r{\^o}me Lang.
\newblock Belief extrapolation (or how to reason about observations and
  unpredicted change).
\newblock {\em Artificial Intelligence}, 175(2):760--790, February 2011.

\bibitem{DuPr23}
Florence {Dupin de Saint-Cyr} and Henri Prade.
\newblock Belief revision and incongruity: Is it a joke?
\newblock {\em Journal of Applied Non-Classical Logics}, 33(3-4):467--494,
  October 2023.

\bibitem{EiLu00}
Thomas Eiter and Thomas Lukasiewicz.
\newblock Default reasoning from conditional knowledge bases: Complexity and
  tractable cases.
\newblock {\em Artificial Intelligence}, 124(2):169--241, 2000.

\bibitem{ekman_1992}
Paul Ekman.
\newblock An argument for basic emotions.
\newblock {\em Cognition and Emotion}, 6(3-4):169--200, May 1992.

\bibitem{ely_2015}
Jeffrey Ely, Alexander Frankel, and Emir Kamenica.
\newblock Suspense and {{Surprise}}.
\newblock {\em Journal of Political Economy}, 123(1):215--260, February 2015.

\bibitem{Finger87}
Joseph~Jeffrey Finger.
\newblock {\em Exploiting constraints in design synthesis}.
\newblock PhD thesis, Stanford University, Stanford, CA, 1987.

\bibitem{genette1979}
Gerard Genette.
\newblock {\em Nouveau Discours du Récit}.
\newblock Seuil, 1983.

\bibitem{GoPe91}
Mois{\'e}s Goldszmidt and Judea Pearl.
\newblock On the consistency of defeasible databases.
\newblock {\em Artificial Intelligence}, 52(2):121--149, 1991.

\bibitem{green_2000}
Melanie Green and Timothy Brock.
\newblock The {{Role}} of {{Transportation}} in the {{Persuasiveness}} of
  {{Public Narrative}}.
\newblock {\em Journal of personality and social psychology}, 79:701--21,
  November 2000.

\bibitem{halpern_2001}
Joseph~Y. Halpern.
\newblock Alternative {{Semantics}} for {{Unawareness}}.
\newblock {\em Games and Economic Behavior}, 37(2):321--339, November 2001.

\bibitem{Hintikka62}
Kaarlo Jaakko~Juhani Hintikka.
\newblock {\em Knowledge and belief: An introduction to the logic of the two
  notions}.
\newblock Cornell University Press, Ithaca and London, 1962.

\bibitem{KaMe91}
Hirofumi Katsuno and Alberto~O Mendelzon.
\newblock On the difference between updating a knowledge base and revising it.
\newblock {\em KR}, 91:387--394, 1991.

\bibitem{KLM90}
Sarit Kraus, Daniel Lehmann, and Menachem Magidor.
\newblock Nonmonotonic reasoning, preferential models and cumulative logics.
\newblock {\em Artificial intelligence}, 44(1-2):167--207, 1990.

\bibitem{Lehmann95}
Daniel Lehmann.
\newblock Another perspective on default reasoning.
\newblock {\em Annals of mathematics and artificial intelligence}, 15:61--82,
  1995.

\bibitem{LoCa2007}
Emiliano Lorini and Cristiano Castelfranchi.
\newblock The cognitive structure of surprise: looking for basic principles.
\newblock {\em Topoi}, 26(1):133--149, 2007.

\bibitem{lorini_2011}
Emiliano Lorini and Fran{\c c}ois Schwarzentruber.
\newblock A logic for reasoning about counterfactual emotions.
\newblock {\em Artificial Intelligence}, 175(3):814--847, March 2011.

\bibitem{Martens15}
Chris Martens.
\newblock Ceptre: A language for modeling generative interactive systems.
\newblock In {\em Proceedings of the AAAI Conference on Artificial Intelligence
  and Interactive Digital Entertainment}, volume~11, pages 51--57, 2015.

\bibitem{Mazlack_2004}
Lawrence~J. Mazlack.
\newblock Granular causality speculations.
\newblock In {\em IEEE Annual Meeting of the Fuzzy Information, 2004.
  Processing NAFIPS '04.}, volume~2, pages 690--695 Vol.2, 2004.

\bibitem{McCarthy77}
John McCarthy.
\newblock Epistemological problems of artificial intelligence.
\newblock In {\em Proc. Int. Joint Conf on Artificial Intelligence (IJCAI'77)},
  pages 1038--1044. Elsevier, 1977.

\bibitem{McHa69}
John McCarthy and Patrick~J Hayes.
\newblock Some philosophical problems from the standpoint of artificial
  intelligence.
\newblock {\em {M}achine {I}ntelligence}, 4:463--502, 1969.

\bibitem{modica1994awareness}
Salvatore Modica and Aldo Rustichini.
\newblock Awareness and partitional information structures.
\newblock {\em Theory and decision}, 37:107--124, 1994.

\bibitem{norling_2003}
Emma Norling.
\newblock Capturing the quake player: Using a {{BDI}} agent to model human
  behaviour.
\newblock In {\em Proceedings of the Second International Joint Conference on
  {{Autonomous}} Agents and Multiagent Systems}, pages 1080--1081, Melbourne
  Australia, July 2003. ACM.

\bibitem{ortony_2022}
Andrew Ortony, Gerald~L Clore, and Allan Collins.
\newblock {\em The cognitive structure of emotions}.
\newblock Cambridge university press, 2022.

\bibitem{Pearl1990}
Judea Pearl.
\newblock System {Z}: A natural ordering of defaults with tractable
  applications to nonmonotonic reasoning.
\newblock In {\em Proc. 3rd Conf. on Theoretical Aspects of Reasoning about
  Knowledge}, pages 121--135, 1990.

\bibitem{pizzi_2007}
David Pizzi, Fred Charles, Jean-Luc Lugrin, and Marc Cavazza.
\newblock Interactive {{Storytelling}} with {{Literary Feelings}}, April 2007.

\bibitem{plutchik_1980}
Robert Plutchik.
\newblock A general psychoevolutionary theory of emotion.
\newblock In {\em Theories of emotion}, pages 3--33. Elsevier, 1980.

\bibitem{propp_1968}
Vladimir Propp.
\newblock {\em Morphology of the {{Folktale}}: {{Second Edition}}}.
\newblock University of Texas Press, 1968.

\bibitem{Reiter80}
Raymond Reiter.
\newblock A logic for default reasoning.
\newblock {\em Artificial intelligence}, 13(1-2):81--132, 1980.

\bibitem{rivera-villicana_2016a}
Jessica {Rivera-Villicana}, Fabio Zambetta, James Harland, and Marsha Berry.
\newblock Using {{BDI}} to {{Model Players Behaviour}} in an {{Interactive
  Fiction Game}}.
\newblock In Frank Nack and Andrew~S. Gordon, editors, {\em Interactive
  {{Storytelling}}}, volume 10045, pages 209--220. Springer International
  Publishing, Cham, 2016.

\bibitem{Shackle1961}
George Lennox~Sharman Shackle.
\newblock {\em Decision, Order and Time in Human Affairs}.
\newblock (2nd edition), Cambridge University Press, UK, 1961.

\bibitem{touist}
Khaled Skander~Ben Slimane, Alexis Comte, Olivier Gasquet, Abdelwahab Heba,
  Olivier Lezaud, Frederic Maris, and Ma{\"{e}}l Valais.
\newblock Twist your logic with touist.
\newblock {\em CoRR}, abs/1507.03663, 2015.

\bibitem{SPBS23}
Tran~Cao Son, Enrico Pontelli, Marcello Balduccini, and Torsten Schaub.
\newblock Answer set planning: a survey.
\newblock {\em Theory and Practice of Logic Programming}, 23(1):226--298, 2023.

\bibitem{sternberg_2001}
Meir Sternberg.
\newblock How {{Narrativity Makes}} a {{Difference}}.
\newblock {\em Narrative}, 9(2):115--122, 2001.

\bibitem{thue_2006}
David Thue and Vadim Bulitko.
\newblock Modelling goal-directed players in digital games.
\newblock In {\em Proceedings of the {{AAAI Conference}} on {{Artificial
  Intelligence}} and {{Interactive Digital Entertainment}}}, volume~2, pages
  86--91, 2006.

\bibitem{trabasso_1985}
Tom Trabasso and Linda~L Sperry.
\newblock Causal relatedness and importance of story events.
\newblock {\em Journal of Memory and Language}, 24(5):595--611, October 1985.

\bibitem{vanbenthem_2010}
Johan Van~Benthem and Fernando~R Vel{\'a}zquez-Quesada.
\newblock The dynamics of awareness.
\newblock {\em Synthese}, 177:5--27, 2010.

\bibitem{Winslett90}
Marianne Winslett.
\newblock {\em Updating Logical Databases}.
\newblock Cambridge University Press, 1990.

\end{thebibliography}
\bibliographystyle{plain}


\end{document}